\newcommand{\tcite}[1]{\cite{#1}}
\newcommand{\pcite}[1]{\cite{#1}}
\newcolumntype{.}{D{.}{.}{-1}}
\newcolumntype{B}{>{\boldmath\DC@{.}{.}{-1}}c<{\DC@end}}
\newcolumntype{E}{>{\centering\DC@{.}{-}{-1}}c<{\DC@end}}
\newcommand\mc[1]{\multicolumn{1}{c}{#1}}
\newcommand\bft[1]{\multicolumn{1}{B}{#1}}
\newcommand\emptyt{\multicolumn{1}{E}{\quad -}}
\tikzset{Variable/.style={circle,thick,draw}}
\tikzset{Factor/.style={rectangle,thick,draw}}
\tikzset{WeirdArrow/.style={dashed}}
\tikzset{every edge/.style={draw=black,very thick}}
\theoremstyle{definition}
\newtheorem{defff}{Definition}
\newenvironment{deff}[1]{%
    \begin{defff}#1}{
    \end{defff}%
}
\theoremstyle{definition}
\newtheorem{exmp}{Example}
\newtheorem{prop}{Proposition}
\newcommand{\predicates}{\mathcal{P}}
\newcommand{\fol}{\mathcal{L}}
\newcommand{\objects}{O}
\newcommand{\btheta}{{\boldsymbol{\theta}}}
\newcommand{\luk}{\L{}ukasiewicz}
\newcommand{\loss}{\mathcal{L}}
\newcommand{\corpus}{\mathcal{K}}
\newcommand{\interpretationfol}{\eta}
\newcommand{\interpretation}{\eta_{\btheta}}
\newcommand{\val}{e_{\btheta}}
\newcommand{\fullval}{e_{\interpretation, p, T, S, I, A}}
\newcommand{\deri}{d}
\newcommand{\dmp}{\deri_{Ic}}
\newcommand{\dmpa}[1]{\deri_{#1c}}
\newcommand{\dmt}{\deri_{I\neg a}}
\newcommand{\dmta}[1]{\deri_{#1\neg a}}
\newcommand{\instantiations}{M}
\newcommand{\instantiation}{\mu}
\newcommand{\bx}{{\boldsymbol{x}}}
\newcommand{\graphwidth}{0.45\linewidth}
\newcommand{\cons}{\mathrm{cons}}
\newcommand{\ant}{\mathrm{ant}}
\newcommand{\mpmag}{|\cons|}
\newcommand{\mtmag}{|\ant|}
\newcommand{\mpratio}{\cons\%}
\newcommand{\mpcorupdate}{{cu_{\cons}}}
\newcommand{\mtcorupdate}{{cu_{\ant}}}
\newcommand{\mpupdateratio}{cu_\cons\%}
\newcommand{\mtupdateratio}{cu_\ant\%}
\newcommand{\dfl}{DFL\xspace}
\newcommand{\dfuzz}{Differentiable Fuzzy Logics\xspace}
\newcommand{\predP}{\pred{P}}
\newcommand{\dualfigure}[4]{\begin{figure}[h]
\centering
\begin{subfigure}[b]{\graphwidth}
\includegraphics[width=\linewidth]{#1}
\end{subfigure}
\begin{subfigure}[b]{\graphwidth}
\includegraphics[width=\linewidth]{#2}
\end{subfigure}%
\caption{#3}
\label{#4}
\end{figure}}
\newcommand{\imgT}{images/theory_kr/}
\newcommand{\imgE}{images/experiments/}
\newcommand{\pred}[1]{\textsf{\small{#1}}}
\title{Analyzing Differentiable Fuzzy Implications}
\author{Emile van Krieken\and Erman Acar\and Frank van Harmelen \\
        \affiliations Vrije Universiteit Amsterdam, \\
        De Boelelaan 1105, 1081 HV Amsterdam, The Netherlands \\
        \emails \{e.van.krieken, Frank.van.Harmelen, Erman.Acar\}@vu.nl}
\begin{document}
\maketitle

\begin{abstract}
Combining symbolic and neural approaches has gained considerable attention in the AI community, as it is often argued that the strengths and weaknesses of these approaches are complementary. One such trend in the literature are weakly supervised learning techniques that employ operators from fuzzy logics. In particular, they use prior background knowledge described in such logics to help the training of a neural network from unlabeled and noisy data. By interpreting logical symbols using neural networks (or \emph{grounding} them), this background knowledge can be added to regular loss functions, hence making reasoning a part of learning. 

In this paper, we investigate how implications from the fuzzy logic literature behave in a differentiable setting. In such a setting, we analyze the differences between the formal properties of these fuzzy implications. It turns out that various fuzzy implications, including some of the most well-known, are highly unsuitable for use in a differentiable learning setting.
A further finding shows a strong imbalance between gradients driven by the antecedent and the consequent of the implication. Furthermore, we introduce a new family of fuzzy implications (called sigmoidal implications) to tackle this phenomenon. Finally, we empirically show that it is possible to use \dfuzz for semi-supervised learning, and show that sigmoidal implications outperform other choices of fuzzy implications. 
\end{abstract}

\section{Introduction}
In recent years, integrating  symbolic and statistical 
approaches to AI gained considerable attention \pcite{garcez2012neural,Besold2017a}. This research line has gained further traction due to recent influential critiques on purely statistical deep learning \pcite{marcus2018deep,pearl2018theoretical}. While deep learning has brought many important breakthroughs \pcite{brock2018large,radford2019language,silver2017mastering}, there have been concerns about the massive amounts of data that models need to learn even a  simple concept. In contrast, traditional symbolic AI could easily reuse concepts e.g., using only a single logical statement one can already express domain knowledge conveniently.

However, symbolic AI also has its weaknesses. One is scalability: dealing with large amounts of data while performing complex reasoning tasks. Another is not being able to deal with the noise and ambiguity of e.g. sensory data. The latter is also related to the well-known \textit{symbol grounding problem} which \tcite{harnad1990symbol} defines as how \say{the semantic interpretation of a formal symbol system can be made intrinsic to the system, rather than just parasitic on the meanings in our heads}. In particular, symbols refer to concepts that have an intrinsic meaning to us humans, but computers manipulating these symbols cannot  \emph{understand} (or \emph{ground}) this meaning. On the other hand, a properly trained deep learning model excels at modeling complex sensory data. Therefore, several recent approaches \pcite{diligenti2017,garnelo2016towards,Serafini2016,DBLP:conf/nips/2018,Evans2018} aimed at interpreting symbols that are used in logic-based systems using deep learning models. These implement \tcite{harnad1990symbol} \say{a hybrid nonsymbolic/symbolic system [...] in which the elementary symbols are grounded in [...] non-symbolic representations that pick out, from their proximal sensory projections, the distal object categories to which the elementary symbols refer.} 

In this article, we introduce \textit{\dfuzz} (\dfl) which aims to integrate reasoning and learning by using logical formulas expressing background knowledge. In order to ensure loss functions are differentiable, \dfl uses fuzzy logic semantics \pcite{klir1995fuzzy}. Moreover, predicate, function and constant symbols are interpreted using a deep learning model. By maximizing the degree of truth of the background knowledge using gradient descent, both learning and reasoning are performed in parallel. The loss function can be used for weakly supervised learning \pcite{zhou2017}, like detecting noisy or inaccurate supervision \pcite{Donadello2017}, or semi-supervised learning \pcite{pmlr-v80-xu18h,P16-1228}. Under this interpretation, \dfl corrects the predictions of the deep learning model when it is logically inconsistent. 

Next, we present an analysis of the choice of fuzzy implication. A fuzzy implication generalizes the Boolean implication, and it is usually differentiable, which enables its use in \dfl. Interestingly, the derivatives of the implications determine how \dfl corrects the deep learning model when its predictions are inconsistent with the background knowledge. We show that the qualitative properties of these derivatives are integral to both the theory and practice of \dfl. 


More specifically, the main contribution of this article is to answer the following question: \emph{Which fuzzy logic implications have convenient theoretical properties when using them in gradient descent?} To this end, 
\begin{itemize}
    \item we introduce several known implications from fuzzy logic (Section \ref{sec:background}) and the framework of \dfuzz (Section \ref{sec:reallogic}) that uses these implications;
    \item we analyze the theoretical properties of fuzzy implications and introduce a new family of fuzzy implications called sigmoidal implications (Section \ref{sec:implicationss});
    \item we perform experiments to compare fuzzy implications in a semi-supervised experiment (Section \ref{chapter:experiments}). 
    \item we conclude with several recommendations for choices of fuzzy implications.
\end{itemize}

\section{Background}
\label{sec:background}

 We will denote predicates using $\pred{cushion}$, variables by $x, y, z, x_1, ...$ and objects by $o_1, o_2, ...,$. For convenience, we will be limiting ourselves to function-free formulas in prenex normal form. Formulas in \textit{prenex normal form} start with quantifiers followed by a quantifier-free subformula. 
 An \textit{atom} is $\predP(t_1, ..., t_m)$ where $t_1, ..., t_m$ are terms. If $t_1, ..., t_m$ are all constants, we say it is a \textit{ground atom}.

Fuzzy logic is a real-valued logic where truth values are real numbers in $[0, 1]$ where 0 denotes completely false and 1 denotes completely true. 
We will be looking at predicate fuzzy logics in particular, which extend propositional fuzzy logics with universal and existential quantification. In this text, we limit ourselves to the classic fuzzy negation $N(a)=1-a$.

To properly introduce fuzzy implications, we require the notions of t-norms that generalize boolean conjunction, and t-conorms that generalize boolean disjunction. 
A \textit{t-norm} is a function $T: [0,1]^2\rightarrow [0, 1]$ that is commutative, associative, increasing, and for all $a\in [0,1]$, $T(1, a) = a$. A \textit{t-conorm} is a function $S: [0,1]^2\rightarrow [0, 1]$ that is commutative, associative, increasing, and for all $a\in [0,1]$, $S(0, a) = a$. T-conorms are constructed from a t-norm using $S(a, b) = 1 - T(1-a, 1-b)$.

Fuzzy implications are used to compute the truth value of $p\rightarrow q$. $p$ is called the \textit{antecedent} and $q$ the \textit{consequent} of the implication. We follow \tcite{Jayaram2008} and refer to it for details and proofs.
\begin{deff}
\label{def:implication}
A \textit{fuzzy implication} is a function $I: [0, 1]^2\rightarrow [0, 1]$ so that for all $a, c\in [0, 1]$, $I(\cdot, c)$ is decreasing, $I(a, \cdot)$ is increasing and for which $I(0, 0) = 1$,  $I(1, 1) = 1$ and $I(1, 0) = 0$.
\end{deff}
From this definition follows that $I(0, 1) = 1$. We next introduce several optional properties of fuzzy implications that we will use in our analysis. 
\begin{deff}
\label{deff:implications_optional}
A fuzzy implication $I$ satisfies
\begin{enumerate}
    \item \textit{left-neutrality (LN)} if for all $c\in [0,1]$, $I(1, c) = c$ (generalizes $(1\rightarrow p) \equiv p$);
    \item the \textit{exchange principle (EP)} if for all $ a,b,c\in[0,1]$,  $I(a, I(b, c)) = I(b, I(a, c))$ (generalizes $p\rightarrow(q\rightarrow r) \equiv q\rightarrow(p\rightarrow r)$);
    \item the \textit{identity principle (IP)} if for all $a\in[0,1]$, $I(a, a) = 1$ (generalizes the tautology $p\rightarrow p$);
    \item \textit{contrapositive symmetry (CP)} if for all $a, c\in [0,1]$, $I(a, c)=I(1-c, 1-a)$  (generalizes $p\rightarrow q \equiv \neg q \rightarrow \neg p$);
    \item \textit{left-contrapositive symmetry (L-CP)} if for all $a, c\in [0,1]$, $I(1-a, c) = I(1-c, a)$ (generalizes $\neg p \rightarrow q \equiv \neg q \rightarrow p$);
    \item \textit{right-contrapositive symmetry (R-CP)} if for all $a,c\in[0,1]$, $I(a, 1-c) = I(c, 1 - a)$ (generalizes $p\rightarrow \neg q \equiv q \rightarrow \neg p$).
\end{enumerate}
\end{deff}
\subsubsection{R-Implications}
Using a common construction, we find R-implications. They are the standard choice for implication in t-norm fuzzy logics. 



\begin{deff}
\label{deff:r-implication}
Let $T$ be a t-norm. The function $I_T: [0,1]^2\rightarrow [0, 1]$ is called an \textit{R-implication} and defined as $I_T(a, c) = \sup\{b\in [0, 1]|T(a, b) \leq c\}$.
\end{deff}
The \textit{supremum} of a set $A$, denoted $\sup\{A\}$, is the lowest upper bound of $A$. All R-implications are fuzzy implications, and all satisfy LN, IP and EP. 
Note that if $a\leq c$ then $I_T(a, c) = 1$. 

\subsubsection{S-Implications}
In classical logic, the (material) implication is defined using $p\rightarrow q = \neg p \vee q$. 
Generalizing this definition, we can use a t-conorm $S$ to construct a fuzzy implication.
\begin{deff}
Let $S$ be a t-conorm. The function $I_{S}: [0, 1]^2\rightarrow[0,1]$ is called an \textit{S-implication} and is defined for all $a, c\in [0, 1]$ as $I_S(a, c) = S(1 - a, c)$.
\end{deff}

\begin{table*}[ht]
    \centering
    \begin{tabular}{lllll}
    \hline
    Name                  & Associated t-norm &  Equation                          & Properties\\ \hline 
    Kleene-Dienes         & Gödel             & $I_{KD}(a, c) = \max(1-a, c)$     & All but IP, S-implication \\
    Reichenbach           & Product           & $I_{RC}(a, c) = 1 - a + a\cdot c$ & All but IP, S-implication \\ 
    \luk                  & \luk              & $I_{LK}(a, c) = \min(1-a+c, 1)$   & All, S-implication, R-implication\\ 
    Gödel                 & Gödel             & $I_G(a, c) =\begin{cases}
        1, & \text{if } a \leq c \\
        c, & \text{otherwise}
      \end{cases}$ & LN, EP, IP, R-implication \\
    Goguen               & Product            & $I_{GG}(a, c) =\begin{cases}
        1, & \text{if } a \leq c \\
        \frac{c}{a}, & \text{otherwise}
      \end{cases}$ & LN, EP, IP, R-implication \\
    \hline
    
    \end{tabular}
    \caption{Some common differentiable implications.}
    \label{tab:implications}
\end{table*}
All S-implications $I_{S}$ are fuzzy implications and satisfy every property from Definition \ref{deff:implications_optional} but IP.

Table \ref{tab:implications} shows some common differentiable S-implications and R-implications.

\label{chapter:theory}

\section{Differentiable Fuzzy Logics}
\label{sec:reallogic}

\dfuzz (\dfl) are fuzzy logics with differentiable connectives for which differentiable loss functions can be constructed that represent logical formulas. Examples of logics in this family \pcite{Serafini2016,marra2019learning,diligenti2017,marra2018,guo2016} will be discussed in Section \ref{chapter:related_work}. They use background knowledge to deduce the truth value of statements in unlabeled or poorly labeled data to be able to use such data during learning. This can be beneficial as unlabeled, poorly labeled and partially labeled data is cheaper and easier to come by. 

We motivate the use of \dfl with the following scenario: Assume we have an agent $M$ whose goal is to describe the scene on an image. It gets feedback from a supervisor $S$, 
who does not have an exact description of these images available. However, $S$ does have a background knowledge base $\corpus$, encoded in some logical formalism, about the concepts contained on the images. The intuition behind \dfl is that $S$ can correct $M$'s descriptions of scenes when they are not consistent with its knowledge base $\corpus$.

\begin{figure}
    \centering
    \includegraphics[width=0.33\textwidth]{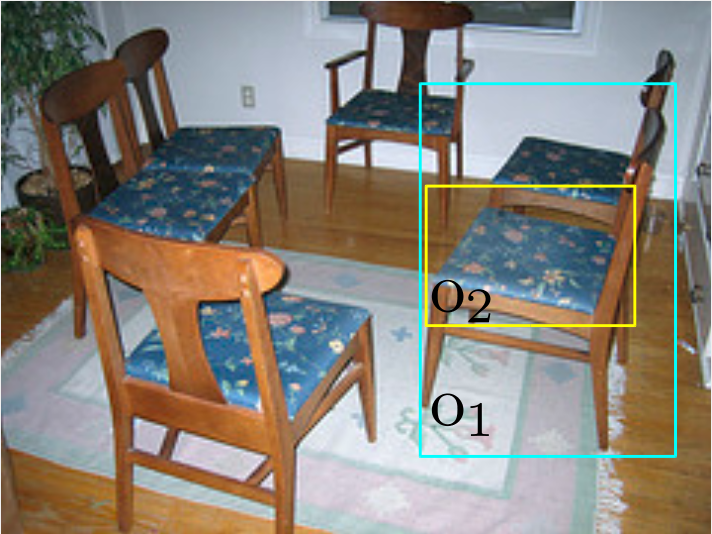}
    \caption{In this running example, we have an image with two objects on it, $o_1$ and $o_2$. }
    \label{fig:chair2}
\end{figure}

\begin{exmp}
\label{exmp:diffreason}
`Agent $M$ has to describe the image $I$ in Figure \ref{fig:chair2} containing two objects, $o_1$ and $o_2$. $M$ and the supervisor $S$ only know about the unary class predicates $\{\pred{chair}, \pred{cushion}, \pred{armRest}\}$ and the binary predicate $\{\pred{partOf}\}$. Since $S$ does not have a description of $I$, it will have to correct $M$ based on the knowledge base $\corpus$. $M$ describes the image as follows, where the probability indicates the confidence in an observation: 
\begin{align*}
p(\pred{chair}(o_1))&=0.9  &p(\pred{chair}(o_2))&=0.4\\
p(\pred{cushion}(o_1))&=0.05 & p(\pred{cushion}(o_2))&=0.5\\
p(\pred{armRest}(o_1))&=0.05 & p(\pred{armRest}(o_2))&=0.1\\
p(\pred{partOf}(o_1, o_1)) &= 0.001 & p(\pred{partOf}(o_2, o_2)) &= 0.001\\
p(\pred{partOf}(o_1, o_2)) &= 0.01 & p(\pred{partOf}(o_2, o_1)) &= 0.95
\end{align*}
Suppose that $\corpus$ contains the following logic formula which says objects that are a part of a chair are either cushions or armrests:
\begin{equation*}
    \forall x, y\ \pred{chair}(x) \wedge \pred{partOf}(y, x) \rightarrow \pred{cushion}(y) \vee \pred{armRest}(y).
\end{equation*}
$S$ might now reason that since $M$ is relatively confident of $\pred{chair}(o_1)$ and $\pred{partOf}(o_2, o_1)$ that the antecedent of this formula is satisfied, and thus $\pred{cushion}(o_2)$ or $\pred{armRest}(o_2)$ has to hold. Since $p(\pred{cushion}(o_2)|I, o_2) > p(\pred{armRest}(o_2)|I, o_2)$, a possible correction would be to tell $M$ to increase its degree of belief in $\pred{cushion}(o_2)$.
\end{exmp}

We would like to automate the kind of supervision $S$ performs in the previous example. Therefore, we next formally introduce \dfl, in which truth values of ground atoms are in $[0, 1]$, and logical connectives are interpreted using fuzzy operators. 
\dfl defines a new semantics using vector embeddings and functions on such vectors in place of classical semantics. In classical logic, a \textit{structure} consists of a domain of discourse and an interpretation function, and is used to give meaning to the predicates. Similarly, in \dfl a structure consists of a probability distribution defined on an embedding space and an \textit{embedded interpretation}:

\begin{deff}
\label{deff:distr_inter}

    A \textit{\dfuzz structure} is a tuple $\langle p, \interpretation\rangle$, where $p$ is a \textit{domain distribution} over $d$-dimensional 
    objects $o\in \mathbb{R}^d$ whose \textit{domain of discourse} is the \textit{support} of $p$ (i.e., $\objects=\mathrm{supp}(p)=\{o|p(o) > 0, o\in \mathbb{R}^d\}$), and  $\interpretation$ is an (\textit{embedded}) \textit{interpretation}  under $\btheta$ (a $W$-dimensional real vector, also called \emph{parameters})  which maps every  predicate symbol $\pred{P} \in \predicates$ with arity $\alpha$ to a function that associates $\alpha$ objects to an element in $[0, 1]$ (i.e., $\interpretation(\pred{P}): \objects^\alpha\rightarrow [0, 1]$).
\end{deff}

To address the \textit{symbol grounding problem} \pcite{harnad1990symbol}, objects in the domain of discourse are $d$-dimensional vectors of reals. Their semantics come from the underlying semantics of the vector space as terms are interpreted in a real (valued) world \pcite{Serafini2016}. Predicates are interpreted as functions mapping these vectors to a fuzzy truth value. Embedded interpretations are implemented using a neural network model with trainable network parameters $\btheta$. Different values of $\btheta$ will produce different embedded interpretations $\interpretation$. The domain distribution is used to limit the size of the vector space. For example, $p$ might be the distribution over images representing only the natural images.  

Next, we define how to compute the truth value of formulas of \dfl, which generalizes the computation of Real Logic \pcite{Serafini2016}. 
An \textit{aggregation operator} is a function $A: [0, 1]^n\rightarrow [0, 1]$ that is symmetric and increasing with respect to each argument, and for which $A(0, ..., 0)=0$ $A(1, ..., 1) = 1$.
A \textit{variable assignment} $\mu$ maps variable symbols $x$ to objects $o\in \objects$. $\mu(x)$ retrieves the object $o\in \objects$ assigned to $x$ in $\mu$.

\begin{deff}
\label{deff:val}
Let $\langle p, \interpretation\rangle$ be a \dfl structure, $T$ a t-norm, $S$ a t-conorm, $I$ a fuzzy implication and $A$ an aggregation operator. Then the \textit{valuation function} $\fullval$ (or, for brevity, $\val$) computes the truth value of a formula $\varphi$ in $\fol$ given a variable assignment $\instantiation$. It is defined inductively as follows:
\begin{align}
    \label{eq:rlpred}
    &\val\left(\pred{P}(x_1, ..., x_m) \right) = \interpretation(\pred{P})\left(\instantiation(x_1 ), ..., \instantiation(x_m )\right)\\
    \label{eq:rlneg}
    &\val(\neg \phi ) = 1 - \val(\phi )\\
    \label{eq:rlconj}
    &\val(\phi \wedge \psi ) = T(\val(\phi ), \val(\psi ))\\
    &\val(\phi\vee\psi ) = S(\val(\phi ), \val(\psi ))\\
    \label{eq:rlimp}
    &\val(\phi\rightarrow\psi ) = I(\val(\phi ), \val(\psi ))\\
    \label{eq:rlaggr}
    &\val(\forall x\ \phi ) = A_{o\in \objects} \val(\phi),  \text{with } x \text{ assigned to } o \text{ in } \mu.
\end{align}
\end{deff}

Equation \ref{eq:rlpred} defines the fuzzy truth value of an atomic formula. $\instantiation$ finds the objects assigned to the terms $x_1, ..., x_m$ resulting in a list of $d$-dimensional vectors. These are the inputs to the interpretation of the predicate symbol $\interpretation(\pred{P})$ to get a fuzzy truth value. Equations \ref{eq:rlneg} - \ref{eq:rlimp} define the truth values of the connectives using the operators $T, S$ and $I$. 

Equation \ref{eq:rlaggr} defines the truth value of universally quantified formulas $\forall x\ \phi$. This is done by enumerating the domain of discourse $o\in\objects$, computing the truth value of $\phi$ with $o$ assigned to $x$ in $\mu$, and combining the truth values using an aggregation operator $A$.
When enumerating the objects is not viable, we can choose to sample a batch of objects to approximate the computation of the valuation.
It is commonly assumed in Machine Learning \cite{goodfellow2016deep}(p.109) that a dataset contains independent samples from the domain distribution $p$ and thus using such samples approximates sampling from $p$. Unfortunately, by relaxing quantifiers in this way we lose soundness of the logic. 

In \dfl, the parameters $\btheta$ are learned using \textit{fuzzy maximum satisfiability} \pcite{Donadello2017}, which finds parameters that maximize the valuation of the knowledge base $\corpus$.

\begin{deff}
Let $\corpus$ be a knowledge base of formulas, $\langle p, \interpretation\rangle$ a \dfl structure for the predicate symbols in $\corpus$ and $\fullval$ a valuation function. Then the \textit{\dfuzz loss} $\loss_{\dfl}$ of a knowledge base of formulas $\corpus$ is computed using 
\begin{equation}
\label{eq:lossrl}
    \loss_{\dfl}(\btheta; \objects, \corpus) = -\sum_{\varphi\in\corpus}w_{\varphi}\cdot \fullval(\varphi),
\end{equation}
where $w_{\varphi}$ is the weight for formula $\varphi$ which denotes the importance of the formula $\varphi$ in the loss function. The \textit{fuzzy maximum satisfiability problem} is the problem of finding parameters $\btheta^*$ that minimize Equation \ref{eq:lossrl}:
\begin{equation}
\label{eq:bestsatproblem}
    \btheta^* = \text{argmin}_{\btheta}\ \loss_{\dfl}(\btheta; \objects, \corpus).
\end{equation}
\end{deff}

This optimization problem can be solved using a gradient descent method. If the operators $T, S, I$ and $A$ are all differentiable, we can repeatedly apply the chain rule, i.e. reverse-mode differentiation, on the \dfl loss $\loss_{\dfl}(\btheta_n; \objects, \corpus)$, $n=0, ..., N$. This procedure finds the derivative with respect to the truth values of the ground atoms $\frac{\partial \loss_{\dfl}(\btheta_n; \objects, \corpus)}{\partial \interpretationfol_{\btheta_n}(\predP)(o_1, ..., o_m)}$. We can use these partial derivatives to update the parameters $\btheta_n$ using the chain rule, resulting in a different embedded interpretation $\interpretationfol_{\btheta_{n+1}}$. 

One particularly interesting property of \dfuzz is that the partial derivatives of the subformulas with respect to the satisfaction of the knowledge base have a somewhat explainable meaning. For example, turning back to Example \ref{exmp:diffreason}, the computed partial derivatives reflect whether we should increase $p(\pred{cushion}(o_2))$, that is, increase the agents belief in $\pred{cushion}(o_2)$. 

\section{Differentiable Fuzzy Implications}
\label{sec:implicationss}
A significant proportion of background knowledge is written as universally quantified implications of the form  $\forall x\ \phi(x) \rightarrow \psi(x)$, like `all humans are mortal'. 
The implication is used in two well known rules of inference. \textit{Modus ponens} inference says that if $\forall x\ \phi(x) \rightarrow \psi(x)$ and we know that $\phi(x)$, then also $\psi(x)$. \textit{Modus tollens} inference says that if $\forall x\ \phi(x) \rightarrow \psi(x)$ and we know that $\neg\psi(x)$, then also $\neg\phi(x)$, as if $\phi(x)$ were true, $\psi(x)$ should also have been. 

When the learning agent predicts a scene in which an implication is false, the supervisor has multiple choices to correct it. Consider the implication `all ravens are black'. There are 4 categories for this formula: \textit{black ravens} (BR), \textit{non-black non-ravens} (NBNR), \textit{black non-ravens} (BNR) and \textit{non-black ravens} (NBR). Assume our agent observes an NBR, which is inconsistent with the background knowledge. There are four options to consider.
\begin{enumerate}
    \item \textit{Modus Ponens} (MP): The antecedent is true, so by modus ponens, the consequent is also true. We trust the agent's observation of a raven and believe it was a black raven (BR).
    \item \textit{Modus Tollens} (MT): The consequent is false, so by modus tollens, the antecedent is also false. We trust the agent's observation of a non-black object and believe it was not a raven (NBNR).
    \item \textit{Distrust}: We believe the agent is wrong both about observing a raven and a non-black object and it was a black object which is non-raven (BNR).
    \item \textit{Exception}: We trust the agent and ignore the fact that its observation goes against the background knowledge. Hence, it has to be a non-black raven (NBR).
\end{enumerate}
The distrust option seems somewhat useless. The exception option can be correct, but we cannot know when there is an exception from the agent's observations alone. In such cases, \dfl would not be very useful since it would not teach the agent anything new.

We can safely assume that there are far more non-black objects which are not ravens than there are ravens. Thus, from a statistical perspective, it is most likely that the agent observed an NBNR. This shows the imbalance associated with the implication, which was first noted in \tcite{vankrieken2019ravens} for the Reichenbach implication. It is quite similar to the \textit{class imbalance problem} in Machine Learning \pcite{japkowicz2002class} in that the real world has far more `negative' (or \textit{contrapositive}) examples than positive examples of the background knowledge.

This problem is closely related to the Raven paradox \pcite{Hempel1945,Vranas2004} from the field of confirmation theory which ponders what evidence can confirm a statement like `ravens are black'. It is usually stated as follows:
\begin{itemize}
    \item Premise 1: Observing examples of a statement contributes positive evidence towards that statement.
    \item Premise 2: Evidence for some statement is also evidence for all logically equivalent statements.
    \item Conclusion: Observing examples of non-black non-ravens is evidence for `all ravens are black'.
\end{itemize}
The conclusion follows from the fact that `non-black objects are non-ravens' is logically equivalent to `ravens are black'. Although we are considering logical validity instead of confirmation, we note that for \dfl a similar thing happens. When we correct the observation of an NBR to a BR, the difference in truth value is equal to when we correct it to NBNR. More precisely, representing `ravens are black' as $I(a, b)$, where, for example, $I(1, 1)$ corresponds to BR:
\begin{align*}
  &A(x_1, ..., I(1, 0), ..., x_n) - A(x_1, ..., I(1, 1), ..., x_n) \\
  = &A(x_1, ..., I(1, 0), ..., x_n) - A(x_1, ..., I(0, 0), ..., x_n)  
\end{align*}
as $I(0, 0) = I(1, 1) = 1$.  Furthermore, when one agent observes a thousand BR's and a single NBR, and another agent observes a thousand NBNR's and a single NBR, their truth value for `ravens are black' is equal. This seems strange, as the first agent has actually seen many ravens of which only a single exception was not black, while the second only observed many non ravens which were not black, among which a single raven that was not black either. Intuitively, the first agent's beliefs seem to be more in line with the background knowledge. We will now proceed to analyse a number of implication operators in light of this discussion. 



\subsection{Analyzing the Implication Operators}
We define two functions for a fuzzy implication $I$:
\begin{align}
    \dmp(a, c) &= \frac{\partial I(a, c)}{\partial c} \\
    \label{eq:dmp}
    \dmt(a, c) &= -\frac{\partial I(a, c)}{\partial a} = \frac{\partial I(a, c)}{\partial \neg a}.
\end{align}
$\dmp$ is the derivative with respect to the consequent and $\dmt$ is the derivative with respect to the \textit{negated} antecedent. We choose to take the derivative with respect to the negated antecedent as it makes it easier to compare them.

\begin{deff}
A fuzzy implication $I$ is called
\textit{contrapositive differentiable symmetric} if $\dmp(a, c) = \dmt(1-c, 1-a)$ for all $a, c \in [0, 1]$.

\end{deff}
A consequence of contrapositive differentiable symmetry is that if $c=1-a$, then the derivatives are equal since $\dmp(a, c)=\dmt(1 - c, 1-a) = \dmt(1 -(1 - a), c) = \dmt(a, c)$. This could be seen as the `distrust' option in which it increases the consequent and negated antecedent equally.

\begin{prop}
If a fuzzy implication $I$ is contrapositive symmetric, it is also contrapositive differentiable symmetric. 
\end{prop}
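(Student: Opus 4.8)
The plan is to differentiate the defining identity of contrapositive symmetry with respect to the consequent and read off the claim via a one-line chain-rule computation. Recall that contrapositive symmetry (CP) asserts $I(a,c) = I(1-c, 1-a)$ for all $a,c\in[0,1]$, and that contrapositive differentiable symmetry is exactly the statement $\dmp(a,c) = \dmt(1-c,1-a)$ for all $a,c\in[0,1]$. Since we are in the differentiable setting, I assume $I$ is differentiable on $[0,1]^2$, with one-sided derivatives understood at the boundary.

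First I would fix $a\in[0,1]$ and regard both sides of the CP identity as functions of $c$. Differentiating the left-hand side with respect to $c$ gives $\dmp(a,c)$ by definition of $\dmp$. For the right-hand side, write $h(c) = I(u(c), v(c))$ with $u(c) = 1-c$ and $v(c) = 1-a$, where $v$ is constant in $c$. The chain rule yields
\begin{equation*}
h'(c) = \frac{\partial I}{\partial x}\big(u(c),v(c)\big)\cdot u'(c) + \frac{\partial I}{\partial y}\big(u(c),v(c)\big)\cdot v'(c).
\end{equation*}
Since $v'(c)=0$ and $u'(c)=-1$, this collapses to $-\frac{\partial I}{\partial x}(1-c,1-a)$, which is precisely $\dmt(1-c,1-a)$ by the definition of $\dmt$ in Equation~\ref{eq:dmp}. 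Equating the two computed derivatives gives $\dmp(a,c)=\dmt(1-c,1-a)$, as required.

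The only delicate point is the boundary of $[0,1]^2$: when $a\in\{0,1\}$ or $c\in\{0,1\}$ the derivatives in play are one-sided, so one must check that the chain-rule step still applies there. This is routine, because $u$ and $v$ are affine and map $[0,1]$ monotonically into $[0,1]$, so a one-sided derivative in $c$ corresponds to a one-sided derivative of $I$ in its first argument (with the direction reversed by the minus sign), and no derivative that fails to exist is ever invoked. I expect this boundary bookkeeping to be the main — and still minor — obstacle; the rest is a direct differentiation.
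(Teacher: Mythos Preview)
Your proposal is correct and follows essentially the same approach as the paper: both differentiate the contrapositive-symmetry identity $I(a,c)=I(1-c,1-a)$ with respect to $c$ and read off $\dmp(a,c)=\dmt(1-c,1-a)$ via the chain rule. Your version is simply more explicit about the chain-rule bookkeeping and the one-sided boundary issue, which the paper elides.
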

\begin{proof}
Say we have an implication $I$ that is contrapositive symmetric. We find that $\dmp(a, c) = \frac{\partial I(a, c)}{\partial c}$ and $\dmt(1-c, 1-a) = -\frac{\partial I(1-c, 1-a)}{\partial 1 - c}$. Because $I$ is contrapositive symmetric, $I(1-c, 1-a) = I(a, c)$. Thus, $\dmt(1-c, 1-a) = -\frac{\partial I(a, c)}{\partial 1-c}=\frac{\partial I(a, c)}{\partial c}=\dmp(a, c)$. 
\end{proof}
In particular, by this proposition all S-implications are contrapositive differentiable symmetric. This says that there is no difference in how the implication handles the derivatives with respect to the consequent and antecedent.

\begin{prop}
\label{prop:diff_left_neutral}
If an implication $I$ is left-neutral, then $\dmp(1, c) = 1$. If, in addition, $I$ is contrapositive differentiable symmetric, then $\dmt(a, 0) = 1$.
\end{prop}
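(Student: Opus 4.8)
The plan is to prove the two claims in turn, each by a short direct computation; the only real work is bookkeeping with the substitutions.

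For the first claim, I would invoke left-neutrality directly. By LN we have $I(1, c) = c$ for every $c \in [0,1]$, so the section $c \mapsto I(1, c)$ is literally the identity map on $[0,1]$. Since $\dmp(a, c)$ is by definition $\frac{\partial I(a, c)}{\partial c}$, evaluating at $a = 1$ gives $\dmp(1, c) = \frac{\partial I(1, c)}{\partial c} = \frac{\partial c}{\partial c} = 1$. I would note in passing that this is legitimate precisely because LN fixes $I(1, c)$ for \emph{all} $c$, so the derivative in the second argument at $a = 1$ is unambiguous even though $a = 1$ is a boundary value of the first argument.

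For the second claim, I would combine the first part with contrapositive differentiable symmetry, which states $\dmp(x, y) = \dmt(1-y, 1-x)$ for all $x, y \in [0,1]$. The key step is the substitution $x = 1$ and $y = 1 - a$: the right-hand side becomes $\dmt(1-(1-a),\, 1-1) = \dmt(a, 0)$, and the left-hand side becomes $\dmp(1, 1-a)$. Applying the already-established first part of the proposition with its consequent slot set to $1 - a$ yields $\dmp(1, 1-a) = 1$, hence $\dmt(a, 0) = 1$.

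The main (and very mild) obstacle is just choosing the substitution in the contrapositive-symmetry identity so that the two arguments of $\dmt$ collapse to exactly $a$ and $0$; once the pair $x = 1$, $y = 1 - a$ is picked, the conclusion is immediate from the first half. No analytic subtlety beyond the boundary-derivative remark is expected.
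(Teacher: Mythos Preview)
Your proposal is correct and follows essentially the same approach as the paper: differentiate the identity $I(1,c)=c$ for the first claim, then instantiate contrapositive differentiable symmetry at $a=1$ for the second. The only cosmetic difference is that the paper applies the symmetry identity with the pair $(1,c)$ and then relabels $1-c$ as $a$, whereas you substitute $x=1$, $y=1-a$ up front; the content is identical.
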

\begin{proof}
First, assume $I$ is left-neutral. Then for all $c\in[0,1]$, $I(1, c) = c$. Taking the derivative with respect to $c$, it turns out that $\dmp(1, c) = 1$. Next, assume $I$ is contrapositive differentiable symmetric. Then, $\dmp(1, c) = \dmt(1 - c, 1-1) = \dmt(1-c, 0) = 1$. As $1-c\in[0, 1]$, $\dmt(a, 0)=1$.
\end{proof}

All S-implications and R-implications are left-neutral, but only S-implications are all also contrapositive differentiable symmetric.
The derivatives of R-implications vanish when $a\leq c$, that is, on no less than half of the domain. 
Note that the plots in this section are rotated so that the smallest value is in the front. In particular, plots of the derivatives of the implications are rotated 180 degrees compared to the implications themselves. 

\subsubsection{Gödel-based Implications}
\dualfigure{\imgT I_godel.pdf}{\imgT I_kleene.pdf}{Left: The Gödel implication. Right: The Kleene Dienes implication.}{fig:kd_godel}

Implications based on the Gödel t-norm ($T_G(a, b)=\min(a, b)$) make strong discrete choices and increase at most one of their outputs. The two associated implications are shown in Figure \ref{fig:kd_godel}. The Gödel implication is a simple R-implication with the following derivatives:
\begin{equation}
\label{eq:god_imp_deriv}
    \dmpa{I_G}(a, c) =  \begin{cases}
        1, & \text{if } a > c \\
        0, & \text{otherwise}
      \end{cases}, \quad
      \dmta{I_G} (a, b)= 0.
\end{equation}

The Gödel implication increases the consequent whenever $a > c$, and the antecedent is never changed. This makes it a poorly performing implication in practice. For example, consider $a=0.1$ and $c=0$. Then the Gödel implication increases the consequent, even if the agent is fairly certain that neither is true. Furthermore, as the derivative with respect to the negated antecedent is always 0, it can never choose the modus tollens correction, which, as we argued, is actually often the best choice. 

The derivatives of the Kleene-Dienes implication are
\begin{align}
    \dmpa{I_{KD}}(a, c) &=  \begin{cases}
        1, & \text{if } 1 - a < c \\
        0, & \text{if } 1 - a > c
      \end{cases},\\
      \dmta{I_{KD}} (a, b)&=  \begin{cases}
        1, & \text{if } 1-a > c \\
        0, & \text{if } 1-a < c
      \end{cases}   .
\end{align}
Or, simply put, if we are more confident in the truth of the consequent than in the truth of the negated antecedent, increase the truth of the consequent. Otherwise, decrease the truth of the antecedent. This decision can be somewhat arbitrary and does not take into account the imbalance of modus ponens and modus tollens.




\subsubsection{\luk\ Implication}
The \luk\ implication is both an S- and an R-implication. It has the simple derivatives 
\begin{equation}
\label{eq:impl_deriv_luk}
    \dmpa{I_{LK}}(a, c) = \dmta{I_{LK}}(a, c) =  \begin{cases}
        1, & \text{if } a > c \\
        0, & \text{if } a < c.
      \end{cases}
\end{equation}
Whenever the implication is not satisfied because the antecedent is higher than the consequent, it simply increases the negated antecedent and the consequent until it is lower. This could be seen as the `distrust' choice as both observations of the agent are equally corrected, and so does not take into account the imbalance between modus ponens and modus tollens cases. The derivatives of the Gödel implication $I_G$ are equal to those of $I_{LK}$ except that $I_G$ always has a zero derivative for the negated antecedent.

\subsubsection{Product-based Implications}
\label{sec:prod-implication}
\dualfigure{\imgT I_goguen.pdf}{\imgT I_reichenbach.pdf}{Left: The Goguen implication. Right: The Reichenbach implication.}{fig:goguen-i}


The product t-norm is given as $T_P(a, b)= a\cdot b$. The associated R-implication is called the Goguen implication. We plot this implication in Figure \ref{fig:goguen-i}. 
The derivatives of $I_{GG}$ are
\begin{align}
    \dmpa{I_{GG}}(a, c) &= \begin{cases}
    0, &\text{if } a\leq c \\
    \frac{1}{a}, &\text{otherwise} 
    \end{cases}, \\
    \dmta{I_{GG}}(a, c) &= \begin{cases}
    0, &\text{if } a\leq c \\
    \frac{c}{a^2}, &\text{otherwise}
    \end{cases}.
\end{align}

\dualfigure{\imgT dMP_logscale_goguen.pdf}{\imgT dMT_logscale_goguen.pdf}{The derivatives of the Goguen implication. Note that we plot these in log scale.}{fig:deriv_goguen}

We plot these in Figure \ref{fig:deriv_goguen}. This derivative is not very useful. First of all, both the modus ponens and modus tollens derivatives increase with $\neg a$. This is opposite of the modus ponens rule as when the antecedent is \textit{low}, it increases the consequent most. For example, if $\pred{raven}$ is 0.1 and $\pred{black}$ is 0, then the derivative with respect to $\pred{black}$ is 10, because of the singularity when $a$ approaches 0.

The derivatives of the Reichenbach implication are given by:
\begin{equation}
    \dmpa{I_{RC}}(a, c) = a, \quad \dmta{I_{RC}}(a, c) = 1-c.
\end{equation}

These derivatives closely follow modus ponens and modus tollens inference. When the antecedent is high, increase the consequent, and when the consequent is low, decrease the antecedent. However, around $(1-a)=c$, the derivative is equal and the `distrust' option is chosen. This can result in counter-intuitive behaviour. For example, if the agent predicts 0.6 for $\pred{raven}$ and 0.5 for $\pred{black}$ and we use gradient descent until we find a maximum, we could end up at 0.3 for $\pred{raven}$ and 1 for $\pred{black}$. We would end up increasing our confidence in $\pred{black}$ as $\pred{raven}$ was high. However, because of additional modus tollens reasoning, $\pred{raven}$ is barely true. 

Furthermore, if the agent mostly predicts values around $a=0,\ c=0$ as a result of the modus tollens case being the most common, then a majority of the gradient decreases the antecedent as $\dmta{I_{RC}}(0,0)= 1$. We next identify two methods that counteract this behavior.

\dualfigure{\imgT dMT_reichenbach.pdf}{\imgT log_dMT_reichenbach.pdf}{Left: The antecedent derivative of the Reichenbach implication. Right: The antecedent derivative of the Reichenbach implication  with the log-product aggregator.}{fig:reichenbach-aggregators}

\subsubsection{Log product aggregator} The first method for counteracting the `corner' behavior notes that different aggregators change how the derivatives of the implications behave. Note that the aggregator based on the product t-norm is $A_P(x_1, ..., x_n)=\prod_{i=1}^nx_i$. As formulas are in prenex normal form, maximizing this aggregator is equivalent to maximizing the logarithm of this aggregator, which gives $A_{\log P}(x_1, ..., x_n)=\sum_{i=1}^n \log(x_i)$ that is reminiscent of the cross-entropy loss function. Using the chain rule, we find that the negated antecedent derivative becomes:
\begin{align}
    \frac{\partial A_{\log P}(I(a_1, c_1), ..., I(a_n, c_n))}{\partial 1- a_i}= \frac{\dmta{I}(a_i, c_i)}{I(a_i, c_i   )}
\end{align}
As this divides by the truth value of the implication, implications that do not have a high truth value get stronger derivatives. We plot the negated antecedent derivative for the Reichenbach implication when using the log-product aggregator in Figure \ref{fig:reichenbach-aggregators}. Note that the derivative with respect to the negated antecedent in $a_i=0$, $c_i=0$ is still 1. By differentiable contrapositive symmetry, the consequent derivative is 0. Therefore, when using the log-product aggregator, one of antecedent and consequent will still have a gradient.

\subsubsection{Sigmoidal Implications}
\label{sec:sigm_implication}
For the second method for tackling the corner problem, we introduce a new class of fuzzy implications formed by transforming other fuzzy implications using the sigmoid function and translating it so that the boundary conditions still hold.\footnote{The derivation, along with several proofs of properties, can be found at \url{https://github.com/HEmile/differentiable-fuzzy-logics/blob/master/appendix_sigmoidal_implications.pdf}.}

\begin{deff}
\label{deff:sigmoidal}
If $I$ is a fuzzy implication, then the $I$-sigmoidal implication $\sigma_I$ is given for some $s>0$ as
\begin{align}
  \sigma_I(a, c)=\frac{\left(1 + e^{\frac{s}{2}}\right) \cdot \sigma\left(s\cdot I(a, c) - \frac{s}{2}\right) - 1}{e^{\frac{s}{2}}-1}
\end{align}
where $\sigma(x)=\frac{1}{1+e^x}$ denotes the sigmoid function.
\end{deff}

\dualfigure{\imgT min05I_probsum9.pdf}{\imgT min05I_probsum001.pdf}{The Reichenbach-sigmoidal implication for different values of $s$. Left: $s=9$. Right: $s=0.01$.}{fig:probsum_sigm_s}

Here $s$ controls the `spread' of the curve. $\sigma_I$ is the function $ \sigma\left(s\cdot \left(I(a, c) - \frac{1}{2}\right)\right)$ linearly transformed so that its codomain is the closed interval $[0, 1]$. $\sigma_I$ is a fuzzy implication in the sense of Definition \ref{def:implication}. Furthermore, $\sigma_I$ satisfies the identity principle if $I$ does, and is contrapositive (differentiable) symmetric if $I$ is. We plot the Reichenbach-sigmoidal implication $\sigma_{I_{RC}}$ in Figure \ref{fig:probsum_sigm_s} for two values of $s$. Note that for $s=0.01$, the plotted function is indiscernible from the plot of the Reichenbach implication in Figure \ref{fig:goguen-i} as the interval on which the sigmoid acts is extremely small and the sigmoidal transformation is almost linear. The derivative is computed as
\begin{align}
\label{eq:deriv_sigm}
\begin{split}
    \frac{\partial \sigma_I(a, c)}{\partial I(a, c)}=& \frac{s\cdot\left(1+e^{\frac{s}{2}}\right)}{e^{\frac{s}{2}}-1} \cdot \sigma\left(s\cdot I(a, c) - \frac{s}{2}\right) \cdot \\
    & \left(1 -  \sigma\left(s\cdot I(a, c) - \frac{s}{2}\right)\right). 
\end{split}
\end{align}
The derivative keeps the properties of the original function but smoothes the gradient for higher values of $s$. As the derivative of the sigmoid function (that is, $\sigma(x)\cdot(1 - \sigma(x))$) cannot be zero, this derivative vanishes only when $\frac{\partial I(a, c)}{\partial \neg a}=0$ or $\frac{\partial I(a, c)}{\partial c}=0$.

\dualfigure{\imgT min05dMP_probsum9.pdf}{\imgT min05dMT_probsum9.pdf}{The derivatives of the Reichenbach-sigmoidal implication for $s=9$.}{fig:deriv_rcsigm}

\dualfigure{\imgT log_dMP_reichenbach.pdf}{\imgT min05log_dMP_probsum9.pdf}{The consequent derivatives of the log-Reichenbach and log-Reichenbach-sigmoidal (with $s=9$) implications. The figure is plotted in log scale.}{fig:log_dmp}

We plot the derivatives for the Reichenbach-sigmoidal implication $\sigma_{I_{RC}}$ in Figure \ref{fig:deriv_rcsigm}. As expected, it is clearly differentiable contrapositive symmetric. Compared to the derivatives of the Reichenbach implication 
it has a small gradient in all corners. 
In Figure \ref{fig:log_dmp} we compare the consequent derivative of the normal Reichenbach implication with the Reichenbach-sigmoidal implication when using the $\log$ product aggregator. A significant difference is that the sigmoidal variant is less `flat' than the normal Reichenbach implication. This can be useful, as this means there is a larger gradient for values of $c$ that make the implication less true. In particular, the gradient at the modus ponens case ($a=1,\ c=1$) and the modus tollens case ($a=0,\ c=0$) are far smaller, which could help balancing the effective total gradient by solving the `corner' problem of the Reichenbach implication. These derivatives are smaller for for higher values of $s$. 

\section{Experiments}
\label{chapter:experiments}

\label{sec:mnist}
To get an idea of the practical behavior of these implications we now perform a series of simple experiments to analyze them in practice. In this section, we discuss experiments using the MNIST dataset of handwritten digits \pcite{lecun-mnisthandwrittendigit-2010} to investigate the behavior of different fuzzy operators introduced in this paper. 
\subsection{Measures}
\label{sec:mnist_ldr}
To investigate the performance of the different configurations of \dfl, we first introduce several useful metrics. In this section, we assume we are dealing with formulas of the form \break $\varphi=\forall x_1, ..., x_m\ \phi(x_1, ..., x_m)\rightarrow \psi(x_1, ..., x_m)$.
\begin{deff}
The \textit{consequent magnitude} $\mpmag$ and the \textit{antecedent magnitude} $\mtmag$ for a knowledge base $\corpus$  is defined as the sum of the partial derivatives of the consequent and antecedent with respect to the \dfl loss:
\begin{align}
    \mpmag&= \sum_{\varphi\in\corpus}\sum_{\instantiation\in\instantiations_\varphi}\frac{\partial \val(\varphi)}{\partial \val(\psi)},\\
    \mtmag&= \sum_{\varphi\in\corpus} \sum_{\instantiation\in\instantiations_\varphi}\frac{-\partial \val(\varphi)}{\partial \val(\phi)},
\end{align}
where $\instantiations_\varphi$ is the set of instances of the universally quantified formula $\varphi$ and $\psi$ and $\phi$ are evaluated under instantiation $\instantiation$.
The \textit{consequent ratio} $\mpratio$ is the sum of consequent magnitudes divided by the sum of consequent and antecedent magnitudes: $\mpratio = \frac{\mpmag}{\mpmag + \mtmag}$.
\end{deff}


\begin{deff}
Given a \textit{labeling function} $l$ that returns the truth value of a formula according to the data for instance $\instantiation$, the \textit{consequent and antecedent correctly updated magnitudes} are the sum of partial derivatives for which the consequent or the negated antecedent is true:
\begin{align}
    \mpcorupdate &= \sum_{\varphi\in\corpus}\sum_{\instantiation\in\instantiations_\varphi}  l(\psi, \mu) \cdot \frac{\partial \val(\varphi)}{\partial \val(\psi)},\\
    \mtcorupdate &= \sum_{\varphi\in\corpus}-\sum_{\instantiation\in\instantiations_\varphi}l(\neg \phi, \mu) \cdot \frac{\partial \val(\varphi)}{\partial \val(\phi)}.
\end{align}
The \textit{correctly updated ratio for consequent and antecedent} are $
    \mpupdateratio = \frac{\mpcorupdate}{\mpmag}$ and    $\mtupdateratio = \frac{\mtcorupdate}{\mtmag}.$
\end{deff}
That is, if the consequent is true in the data, we measure the magnitude of the derivative with respect to the consequent. 
The correctly updated ratios quantify what fraction of the updates are going in the right direction. When they approach 1, \dfl will always increase the truth value of the consequent or negated antecedent correctly. When it not close to 1, we are increasing truth values of subformulas that are wrong, thus ideally, we want these measures to be high.

\subsection{Experimental Setup}
We use a knowledge base $\corpus$ of universally quantified logic formulas. There is a predicate for each digit, that is $\pred{zero},\ \pred{one}, ..., \pred{eight}$ and  $\pred{nine}$. For example, $\pred{zero}(x)$ is true whenever $x$ is a handwritten digit labeled with 0. Secondly, there is the binary predicate $\pred{same}$ that is true whenever both its arguments are the same digit. We next describe the formulas we use. 

\begin{enumerate}
\item $ \forall x, y\ \pred{zero}(x)\wedge \pred{zero}(y) \rightarrow \pred{same}(x, y)$, ..., $\forall x, y\ \pred{nine}(x)\wedge \pred{nine}(y) \rightarrow \pred{same}(x, y) $. If both $x$ and $y$ are handwritten zeros, for example, then they represent the same digit. 
\item $ \forall x, y\ \pred{zero}(x) \wedge \pred{same}(x, y) \rightarrow \pred{zero}(y)$, ..., $\forall x, y\ \pred{nine}(x) \wedge \pred{same}(x, y) \rightarrow \pred{nine}(y) $. If $x$ and $y$ represent the same digit and one of them represents zero, then the other one does as well. 
\item $ \forall x, y\ \pred{same}(x, y) \rightarrow \pred{same}(y, x) $. This formula encodes the symmetry of the $\pred{same}$ predicate. 
\end{enumerate}


We split the MNIST dataset so that 1\% of it is labeled and 99\% is unlabeled. We use two models.\footnote{Code is available at \url{https://github.com/HEmile/differentiable-fuzzy-logics}.} Given a handwritten digit $\bx$, the first model $p_\btheta(y|\bx)$ computes the distribution over the 10 possible labels. We use 2 convolutional layers with max pooling, the first with 10 and the second with 20 filters, and two fully connected hidden layers with 320 and 50 nodes and a softmax output layer, which is trained using cross entropy. The probability that $\pred{same}(\bx_1, \bx_2)$ for two handwritten digits $\bx_1$ and $\bx_2$ holds is modeled by $p_\btheta(\pred{same}|\bx_1, \bx_2)$. This takes the 50-dimensional embeddings of $\bx_1$ and $\bx_2$ of the fully connected hidden layer $e_{\bx_1}$ and $e_{\bx_2}$. These are used in a Neural Tensor Network \pcite{Socher2013} with a hidden layer of size 50. It is trained using binary cross entropy on the cross product of the labeled dataset. As there are far more negative examples than positive examples, we undersample the negative examples. The \dfl loss is weighted by the \textit{\dfl weight} $w_{\dfl}$ and added to the other two losses.

For all experiments, we use the aggregator to the product aggregator with \dfl weight of $w_{dfl}=10$, and optimize the logarithm of the truth value. For conjunction, we use the Yager t-norm with $p=2$, defined as $T_Y(a, b) = \max(1 - ((1-a)^p+(1-b)^p)^{\frac{1}{p}}, 0)$.

\subsection{Results}
We analyze the results for different implication operators. The purely supervised baseline has a test accuracy of $95.0\%\pm 0.001$ (3 runs).
We report the accuracy of recognizing digits in the test set. We do learning for at most 100.000 iterations (or until convergence). We also report the consequent ratio $\mpratio$ and the consequent and antecedent correctly updated ratios $\mpupdateratio$ and $\mtupdateratio$. 
We can compute these values during the backpropagation of the \dfl loss on the `unlabeled' dataset. Because it is a split of a labeled dataset, we can access the labels for evaluation.

\subsubsection{Implications}
In Table \ref{table:mnist_S-implication_experiments}, we compare different fuzzy implications. The Reichenbach implication and the \luk\ implication work well, both having an accuracy around 97\%. Using the Kleene Dienes implication surpasses the baseline as well.

\begin{table}
\centering
\begin{tabular}{l....}
\hline
                                & \mc{Accuracy} & \mc{$\mpratio$} & \mc{$\mpupdateratio$} & \mc{$\mtupdateratio$}               \\
\hline                    
$I_{KD}$                        & 96.1          & 0.10            & \bft{0.88}            & 0.97                           \\
$I_{LK}$                        & \bft{97.0}    & 0.5             & 0.03                  & 0.97                           \\
$I_{RC}$                        & 96.9          & 0.08            & 0.85                  & \bft{0.99} \\
\hline
$I_G$                           & 90.6          & 1               & 0.07                     & \emptyt                              \\
$I_{GG}$                        & 94.0          & 0.86            & 0.01                  & 0.97                           \\
\hline
\end{tabular}
\caption{The results using several different S-implications and R-implications. }
\label{table:mnist_S-implication_experiments}
\end{table}

As hypothesized, the Gödel implication and Goguen implication have worse performance than the supervised baseline. While the derivatives of $I_{LK}$ and $I_G$ only differ in that $I_G$ disables the derivatives with respect to negated antecedent, $I_{LK}$ performs among the best but $I_G$ performs among the worst, suggesting that the derivatives with respect to the negated antecedent are required to successfully applying \dfl. Note that all well performing implications are S-implications, which inherently balance derivatives with respect to the consequent and negated antecedent by being contrapositive differentiable symmetric. 

\subsubsection{Reichenbach-Sigmoidal Implication}
\label{sec:mnist_rcsigmoidal}
The newly introduced Reichenbach-sigmoidal implication $\sigma_{I_{RC}}$ is a promising candidate for the choice of implication. 
\begin{figure}
\centering
\includegraphics[width=\linewidth]{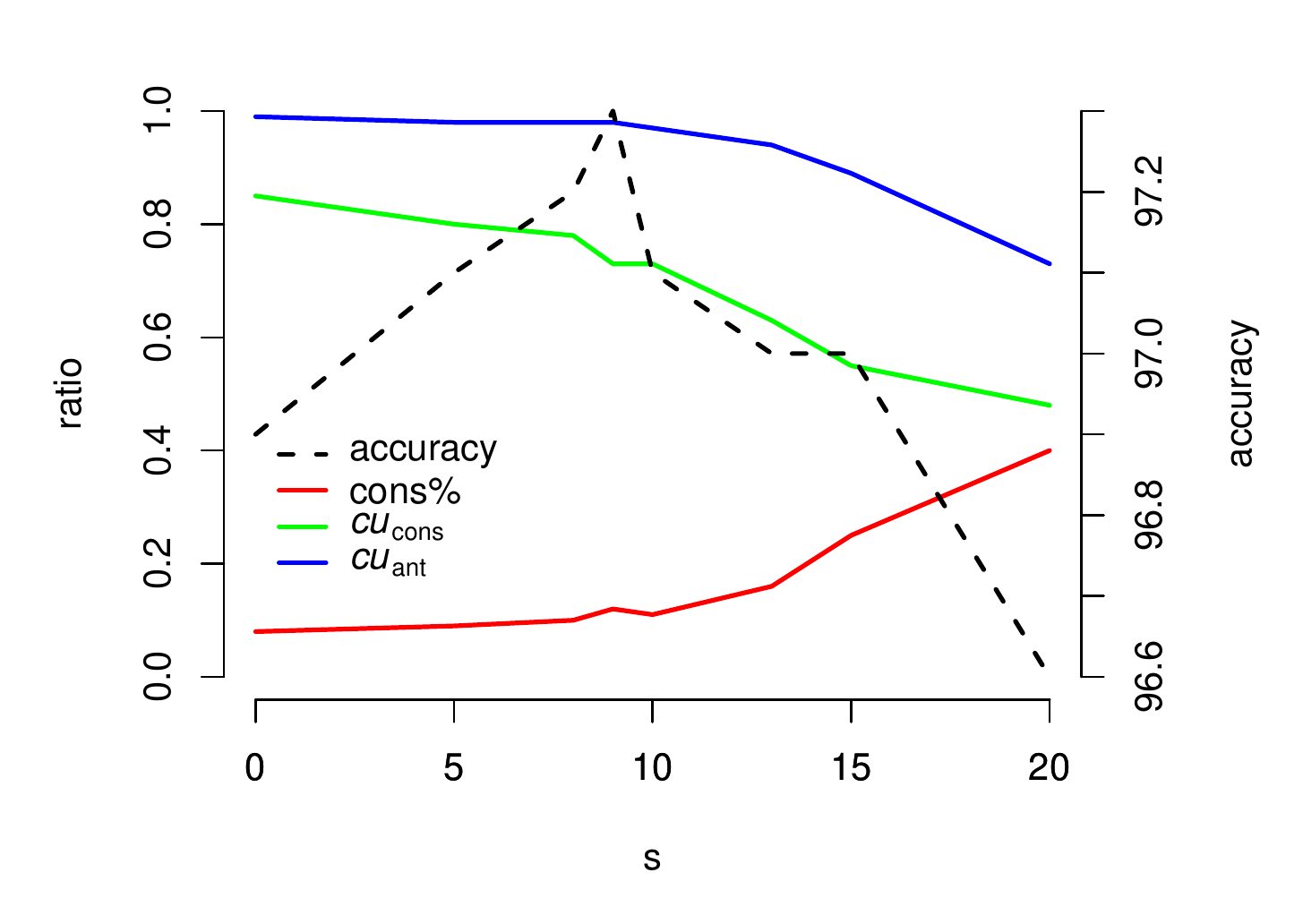}
\caption{The results using the Reichenbach-sigmoidal implication $\sigma_{I_{RC}}$ for various values of $s$.}
\label{fig:mnist_s_experiments}
\end{figure}
In Figure \ref{fig:mnist_s_experiments} we find the results when we experiment with the parameter $s$. Note that when $s$ approaches 0 the Reichenbach-sigmoidal implication is $I_{RC}$. The value of 9 gives the best results, with 97.3\% accuracy. Interestingly enough, there seem to be clear trends in the values of $\mpratio$,\ $\mpupdateratio$ and $\mtupdateratio$. Increasing $s$ seems to increase  $\mpratio$. This is because the antecedent derivative around the corner $a=0,\ c=0$ will be low, as argued before. When $s$ increases, the corners will be more smoothed out. 
Furthermore, both $\mpupdateratio$ and $\mtupdateratio$ decrease when $s$ increases. This could be because around the corners the derivatives become small. Updates in the corner will likely be correct as the model is already confident about those. For a higher value of $s$, most of the gradient magnitude is at instances on which the model is less confident. Regardless, the best parameter value clearly is not the one for which the values of $\mpupdateratio$ and $\mtupdateratio$ are highest, namely the Reichenbach implication itself. 

\subsubsection{Influence of Individual Formulas}
\label{sec:mnist_formulas_experiments}
\begin{table}
\begin{tabular}{l....}
\hline
Formulas                         & \mc{Accuracy} & \mc{$\mpratio$} & \mc{$\mpupdateratio$} & \mc{$\mtupdateratio$}               \\
\hline
(1) \& (2) & \bft{97.1}    & 0.05            & 0.54                  & \bft{0.99} \\
(2) \& (3)  & 95.9          & 0.12            & \bft{0.75}            & 0.95                           \\
(1) \& (3)  & 96.3          & 0.15            & 0.52                  & 0.98                           \\
(1)                     & 95.6          & 0.05            & 0.59                  & \bft{1.00} \\
(2)                     & 95.2          & 0.03            & \bft{0.78}            & 0.99                           \\
(3)                      & \bft{95.8}    & 0.19            & 0.64                  & 0.95               \\
\hline
\end{tabular}
\caption{The results using $\sigma_{I_{RC}}$ for the implication with $s=9$, leaving some formulas out. The numbers indicated the formulas that are present during training.  }
\label{table:mnist_formula_experiments}
\end{table}
Finally, we compare what the influence of the different formulas are in Table \ref{table:mnist_formula_experiments}. Removing the reflexivity formula (3) does not largely impact the performance. The biggest drop in performance is by removing formula (1) that defines the $\pred{same}$ predicate. Using only formula (1) gets slightly better performance than only using formula (2), despite the fact that no positive labeled examples can be found using formula (1) as the predicates $\pred{zero}$ to $\pred{nine}$ are not in its consequent. Since 95\% of the derivatives are with respect to the negated antecedent, this formula contributes by finding additional counterexamples. Furthermore, improving the accuracy of the $\pred{same}$ predicate improves the accuracy on digit recognition: Just using the reflexivity formula (3) has the highest accuracy when used individually, even though it does not use the digit predicates.

\subsubsection{Analysis}


\dualfigure{\imgE cons_cucons.pdf}{\imgE cons_cuant.pdf}{Left: Plot of $\mpratio$ to $\mpupdateratio$. Right: Plot of $\mpratio$ to $\mtupdateratio$. Red dots represent runs using $\sigma_{I_{RC}}$.}{fig:analyze_measures}

We plot the experimental values of $\mpratio$ to the values of $\mpupdateratio$ and $\mtupdateratio$ in Figure \ref{fig:analyze_measures}. For both, there seems to be a negative correlation. Apparently, if the ratio of derivatives with respect to the consequent becomes larger, then this decreases the correctness of the updates. In Section \ref{sec:mnist_rcsigmoidal} we argued, when experimenting with the value of $s$, that this could be because for lower values of $\mpratio$, a smaller portion of the reasoning happens in the `safe' corners around $a=0,\ c=0$ and $a=1,\ c=1$, and more for cases that the agent is less certain about. As all S-implications have strong derivatives at both these corners (Proposition \ref{prop:diff_left_neutral}), this phenomenon is likely present in other S-implications.

Although \dfl significantly improves on the supervised baseline and is thus suited for semi-supervised learning, it is currently not competitive with state-of-the-art methods like Ladder Networks \pcite{rasmus2015semi} which has an accuracy of 98.9\% for 100 labeled pictures and 99.2\% for 1000.

\section{Related Work}
\label{chapter:related_work}
\dfuzz falls into the discipline of Statistical Relational Learning \pcite{getoor2007}, which concerns models that can reason under uncertainty and learn relational structures like graphs. 
Special cases of \dfl have been researched in several papers under different names. Real Logic \pcite{Serafini2016} implements function symbols and uses a model called Logic Tensor Networks to interpret predicates. It uses S-implications. Real Logic is applied to weakly supervised learning on Semantic Image Interpretation \pcite{Donadello2017,donadello2019compensating} and transfer learning in Reinforcement Learning \pcite{badreddine2019injecting}.
Semantic-based regularization (SBR) \pcite{diligenti2017b} applies \dfl to kernel machines. They use R-implications, like \tcite{marra2019learning} 
which simplifies the satisfiability computation and finds generalizations of common loss functions can be found. In \tcite{marra2018}, which employs the Goguen implication, \dfl is applied to image generation. By using function symbols that represent generator neural networks, they create constraints that are used to create a semantic description of an image generation problem. The Reichenbach implication is used in \tcite{Rocktaschel2015b} for relation extraction by using an efficient matrix embedding of the rules. 

The regularization technique used in \tcite{Demeester2016} is equivalent to the \luk\ implication. Instead of using existing data, it finds a loss function which does not iterate over objects, yet can guarantee that the rules hold. A promising approach is using adversarial sets \pcite{minervini2017}, which is a set of objects from the domain that do not satisfy the knowledge base, which are probably the most informative objects. Adversarial sets are applied to natural language interpretation in \pcite{minervini2018}. Both papers use the \luk\ implication. 

Some approaches use probabilistic logics instead of fuzzy logics and interpret predicates probabilistically. DeepProbLog \pcite{DBLP:conf/nips/2018} and Semantic Loss  \pcite{pmlr-v80-xu18h} are probabilistic logic programming languages with neural predicates that compute the probabilities of ground atoms. They support automatic differentiation which can be used to back-propagate from the loss at a query predicate to the deep learning models that implement the neural predicates, similar to \dfl.

\label{chapter:conclusions}

\section{Conclusion}
We analyzed fuzzy implications in \dfuzz in order to understand how reasoning using implications behaves in a differentiable setting.
We have found substantial differences between the properties of a large number of fuzzy implications, and showed that many of them, including some of the most popular implications, are highly unsuitable for use in a differentiable learning setting. 

The Reichenbach implication has derivatives that are intuitive and that correspond to inference rules from classical logic. 
The \luk\ implication is the best R-implication in our experiments. The Gödel and Goguen implications, on the other hand, were much less successful, performing worse than the supervised baseline. The newly introduced Reichenbach-sigmoidal implication performs best on the MNIST experiments. 
The spread of sigmoidal implications can be tweaked to decrease the imbalance of the derivatives with respect to the negated antecedent and consequent. 

We noted an interesting imbalance between derivatives with respect to the negated antecedent and the consequent of the implication. Because the modus tollens case is much more common, we conclude that a large part of the useful inferences on the MNIST experiments are made by decreasing the antecedent, or by `modus tollens reasoning'. Furthermore, we found that derivatives with respect to the consequent often increase the truth value of something that is false as the consequent is false in the majority of times. Therefore, we argue that `modus tollens reasoning' should be embraced in future research. 


\bibliography{references}
\bibliographystyle{named}


\end{document}